\theoremstyle{plain}
\newtheorem{theorem}{Theorem}[section]
\newtheorem{lemma}[theorem]{Lemma}
\theoremstyle{definition}
\newtheorem{definition}[theorem]{Definition}
\theoremstyle{remark}
\newcommand{\inner}[1]{\langle #1\rangle}
\title{Tighter Risk Bounds for Mixtures of Experts}
\author{%
  Wissam Akretche\\
  Département d’informatique et de génie logiciel\\
  Université Laval\\
  Québec, G1V 0A6, Canada \\
  \texttt{wissam.akretche.1@ulaval.ca} \\
  \And
  Frédéric LeBlanc \\
  Institut intelligence et données\\
  Université Laval \\
  Québec, G1V 0A6, Canada \\
  \texttt{frederic.leblanc@iid.ulaval.ca} \\
  \AND
  Mario Marchand\\
  Département d’informatique et de génie logiciel\\
  Université Laval\\
  Québec, G1V 0A6, Canada \\
  \texttt{mario.marchand@ift.ulaval.ca} \\
}
\newcommand{\mVert}{\mkern2mu\Vert\mkern2mu}
\DeclareMathOperator*{\ev}{\mathbb{E}}
\DeclareMathOperator*{\prob}{\mathbb{P}}
\DeclareMathOperator{\ReLU}{ReLU}
\newcommand{\mvert}{\mkern2mu\vert\mkern2mu}
\begin{document}

\maketitle

\begin{abstract}
  In this work, we provide upper bounds on the risk of mixtures of experts by imposing local differential privacy (LDP) on their gating mechanism. These theoretical guarantees are  tailored to mixtures of experts that utilize the one-out-of-$n$ gating mechanism, as opposed to the conventional $n$-out-of-$n$ mechanism. The bounds exhibit logarithmic dependence on the number of experts, and encapsulate the dependence on the gating mechanism in the LDP parameter, making them significantly tighter than existing bounds, under reasonable conditions. Experimental results support our theory, demonstrating that our approach enhances the generalization ability of mixtures of experts and validating the feasibility of imposing LDP on the gating mechanism.
\end{abstract}

\section{Introduction}
Mixtures of experts, initially introduced by \cite{adaptive_mixture_of_local_experts}, have found widespread use in modeling sequential data, including applications in classification, regression, pattern recognition and feature selection tasks (\cite{feature_selection_van} and \cite{feature_selection_khalili}).  One of the fundamental motivations behind mixtures of experts is their ability to break down complex problems into more manageable sub-problems, potentially simplifying the overall task. The structure of these models is well suited to capturing unobservable heterogeneity in the data generation process, dealing with this problem by splitting the data into homogeneous subsets (with the gating network) and associating each subset with an expert. 
This intuitive architecture has led to significant interest in mixture of experts models, resulting in a wealth of research (\cite{twenty_years_moe}), ranging from simple mixtures of experts ( \cite{adaptive_mixture_of_local_experts, moe_hierarchical}) to sparsely gated models (\cite{sparsly_gated_moe}). Moreover, this architecture has inspired the development of various other models, such as switch transformers (\cite{switch_transformers}).
However, despite the considerable attention mixtures of experts have received, advancements in their theoretical analysis have been relatively limited. \cite{risk_bound_moe_azran_ron} proved data-dependent risk bounds for mixtures of experts (with the $n$-out-of-$n$ gating mechanism) using Rademacher complexity, but they exhibit a dependence on the complexity of the class of gating networks and the sum of the complexities of the expert classes, which reflects the complex structure of mixtures of experts but unfortunately leads to potentially large bounds. We are not aware of other work proving generalization bounds specifically tailored to mixtures of experts.

To make theoretical progress, we utilize a well-known privacy-preserving technique called Local Differential Privacy (LDP). It was initially introduced by \cite{dwork} and has since been widely used to preserve privacy for individual data points as in \cite{kasiviswanathan2010learn}. This is achieved by introducing stochasticity in algorithm outputs to control their dependence on specific inputs. This stochasticity is generally quantified by a positive real number $\epsilon$. In this case, we write $\epsilon$-LDP instead of just LDP. The parameter $\epsilon$ quantifies the level of privacy protection in the local differential privacy mechanism, where a smaller value indicates stronger privacy protection. Do note, however, that we do not intend to have privacy-preserving models. For our present purposes, LDP is merely a technique to be applied to parts of our models in order to obtain risk bounds.

In this work, we impose the $\epsilon$-LDP condition on the gating networks of our models as a form of regularization. This method allows us to leverage the numerous benefits of the most complex architectures, such as neural networks, without compromising theoretical guarantees. By relying on LDP, we offer tight bounds on the risk of mixtures of experts models, provided they use the one-out-of-$n$ gating mechanism. Unlike the very few existing guarantees, our bounds depend only logarithmically on the number of experts we have, and the complexity of the gating network only appears in our bounds through the parameter $\epsilon$  of the LDP condition.  Existing bounds on mixtures of experts can be loose in certain cases—particularly when the outputs of the gating network do not depend, or depend only \textit{minimally}, on the input. The motivation for this work is to find a middle ground between input-independent aggregation mechanisms, such as in weighted majority vote classifiers, for which the risk is tightly bounded, and possibly highly input-dependent aggregation mechanisms, such as in usual mixtures of experts, which excel in practice. We aimed to encompass these two mechanisms by introducing a unified framework that quantifies the dependence of the gating mechanism on the input and integrates it into the theoretical guarantees. In our framework,
$\epsilon = 0$ gives input-independence, whereas
$\epsilon \to \infty$ gives arbitrary or uncontrolled input dependence.

\section{Preliminaries}
Let $\Xcal$ be the instance space, $\Ycal$ the label space, and $\Ycal'$ the output space (which can be different from $\Ycal$). As is usual in supervised learning, we assume that data $(x, y) \in \Xcal \times \Ycal$ are generated independently from an unknown probability distribution $\Dcal$. We consider a training set of $m$ examples $S=((x_1,y_1),\ldots,(x_m,y_m)) \sim \Dcal^m$ and a bounded loss function  $\ell\colon \Ycal' \times \Ycal \to [0,1]$.

\subsection{Mixtures of experts}\label{section:moe}
We consider classes $\Hcal_i$ of experts $h_i \colon \Xcal \to \Ycal'$ for $i = 1, \dots, n$. Let $\Gcal$ be a set of gating functions $\gb\colon \Xcal \to [0, 1]^n$ such that, given any $x \in \Xcal$, we have that $\sum_{i=1}^n g_i(x) = 1$, where $g_i(x)$ is the $i$-th component of $\gb(x)$. This means that each gating function defines a probability distribution on $[n] = \{ 1, \dots, n \}$ for each $x \in \Xcal$, where $g_i(x)$ is the probability of choosing the expert index $i$.  

In this work, a mixture of experts consists of $n$ experts, $\hb = (h_1, \ldots, h_n) \in \Hcal_1 \times \dots \times \Hcal_n$, a gating function $\gb$  $ \in \Gcal$ and a gating mechanism that combines the outputs of the experts and the output of the gating function to produce the final output. Our models use the stochastic one-out-of-$n$ 
 gating mechanism, as described in \cite{adaptive_mixture_of_local_experts}.
 It is defined as follows: to make a prediction with $(\gb, \hb) \in \Gcal \times \prod_{i = 1}^n \Hcal_i$ given an instance $x$, draw $i \sim \gb(x)$ and output $h_i(x)$. This stochastic predictor has risk and empirical risk defined by, respectively,
\[
R(\gb, \hb) = \ev_{(x, y) \sim \Dcal} \ev_{i \sim \gb(x)} \ell(h_i(x), y), \quad\text{and}\quad R_S(\gb, \hb) = \frac{1}{m} \sum_{j = 1}^m \ev_{i \sim \gb(x_j)} \ell(h_i(x_j), y_j).
\]
The preference for the one-out-of-$n$ gating mechanism over the $n$-out-of-$n$ mechanism in mixtures of experts is justified by its ability to induce sparsity and noise in the decision of which expert to use, enhancing computational efficiency and robustness to overfitting. This sparsity also offers scalability benefits, particularly in large-scale applications, where activating all experts for each input can lead to increased computational and memory requirements as explained in \cite{sparsly_gated_moe} and \cite{adaptive_mixture_of_local_experts}. Moreover, the one-out-of-$n$ mechanism enables us to derive tighter risk bounds for the models, where the bounds exhibit a logarithmic dependence on the number of experts, rather than the linear dependence often found in existing bounds. This shift from linear to logarithmic dependence significantly improves the tightness of the bounds, especially as the number of experts increases.

\subsection{Local Differential Privacy}
\begin{definition} 
    Let $\Ical$ be a finite set, consider a mechanism that produces an output $i \in \Ical$, given an input $x \in \Xcal$, with probability $\prob(i \mvert x)$, and let $\epsilon$ be a nonnegative real number. Then, the mechanism satisfies the $\epsilon$-Local Differential Privacy ($\epsilon$-LDP) property if and only if
    \begin{equation*} \label{eq:d-privacy}
        \prob(i \mvert x) \leq e^{\epsilon } \prob(i \mvert x') \quad \text{for all $x, x' \in \Xcal$ and all $i \in \Ical$}.
    \end{equation*}
\end{definition}

 Unless stated otherwise, we assume that each $\gb \in \Gcal$ satisfies $\epsilon$-LDP, for some fixed nonnegative real number $\epsilon$. Since we can interpret $\gb$ as a random mechanism that, given $x \in \Xcal$, selects $i \in [n]$ with probability $g_i(x)$, the condition of $\epsilon$-LDP amounts to the following: 
$$
g_i(x) \leq e^\epsilon g_i(x') \quad \text{for all $x, x' \in \Xcal$ and all $i \in [n]$.}
$$

Since $\epsilon$-LDP is an important condition for all of our theoretical results, we provide a practical way of obtaining gating functions satisfying $\epsilon$-LDP from an arbitrary set $\Fcal$ of bounded functions, in the form of the following theorem.
\begin{theorem}
\label{thm:privacy_bounded_function_softmax}
Let $b > 0$ and $\beta \geq 0$ be real numbers, and suppose that $\Fcal$ is a set of functions $\fb \colon \Xcal \to [-b, b]^n$. Let $\Gcal$ be the set of functions $\gb: \Xcal \to [0, 1]^n$ defined by 
\[
g_i(x)\ =\ \frac{\exp(\beta f_i(x)+c_i)}{\sum_{k=1}^n \exp(\beta f_k(x)+ c_k) }, \quad \text{where } \fb = (f_1, \ldots
    , f_n) \in \Fcal \text{ and } (c_1, \ldots
    , c_n) \in \Reals^n.
\]
Then, each $\gb \in \Gcal$ satisfies $4\beta b$-LDP.
\end{theorem}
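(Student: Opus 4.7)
The plan is to bound the ratio $g_i(x)/g_i(x')$ directly and show it is at most $e^{4\beta b}$, which is exactly the $\epsilon$-LDP condition with $\epsilon = 4\beta b$. Fix an arbitrary $i \in [n]$ and arbitrary $x, x' \in \Xcal$. Writing out the softmax, the constants $c_i$ in the numerator cancel and the ratio factors as
\[
\frac{g_i(x)}{g_i(x')} \;=\; \exp\!\bigl(\beta(f_i(x)-f_i(x'))\bigr)\cdot \frac{\sum_{k=1}^n \exp(\beta f_k(x') + c_k)}{\sum_{k=1}^n \exp(\beta f_k(x) + c_k)}.
\]

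First I would bound the left factor. Since $f_i(x), f_i(x') \in [-b, b]$, we get $f_i(x) - f_i(x') \leq 2b$, so the first factor is at most $e^{2\beta b}$. Next I would bound the ratio of sums. Applying the same elementary inequality termwise, for every $k$, $\beta f_k(x') - \beta f_k(x) \leq 2\beta b$, so $\exp(\beta f_k(x') + c_k) \leq e^{2\beta b}\exp(\beta f_k(x) + c_k)$. Summing over $k$ and dividing, the second factor is also bounded by $e^{2\beta b}$. Multiplying the two bounds yields $g_i(x) \leq e^{4\beta b} g_i(x')$, which is exactly the $\epsilon$-LDP condition for $\epsilon = 4\beta b$.

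There is no real obstacle here; the only thing worth being careful about is that both the numerator and the denominator of the softmax contribute a factor of $e^{2\beta b}$, which together produce the factor of $4$ in the exponent rather than the $2$ one might initially expect. The $c_i$ offsets play no role because they cancel (in the numerator ratio) or appear identically in both sums (in the denominator ratio), so the bound is uniform in the choice of $(c_1,\ldots,c_n) \in \Reals^n$.
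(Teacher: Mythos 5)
Your proof is correct and follows essentially the same route as the paper's: both bound the ratio $g_i(x)/g_i(x')$ directly, using $|f_i(x)-f_i(x')|\le 2b$ once for the numerator factor and once (termwise) for the ratio of normalizing sums, yielding $e^{2\beta b}\cdot e^{2\beta b}=e^{4\beta b}$. The only difference is presentational — the paper merges the two exponential contributions into a single maximum of $\exp\bigl(2\beta(f_i(x_1)-f_i(x_2))\bigr)$ before bounding, while you bound the two factors separately — so there is nothing substantive to add.
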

\begin{proof}
    The proof is obtained by performing simple calculations, bounding the ratio \( g_i(x)/g_i(x') \), for all \( x, x' \in \Xcal \) and all \( i \in [n] \).
    The detailed proof is given in Appendix \ref{appendix:proofs}.
\end{proof}

In this work, we don't consider LDP of the whole model as a goal or objective to be achieved, though it would certainly be interesting for other work to do so. Rather, we use LDP as a tool to measure the dependence between the input and the output of the gating network, thereby bridging the gap between input-independent routing mechanisms and input-dependent ones.

\section{PAC-Bayesian bounds for mixtures of experts}

To apply the PAC-Bayes theory, we need to add a level of stochasticity to our predictors: instead of training experts $h_i$, we train probability measures $Q_i$ on each expert set $\Hcal_i$. For convenience, we write $Q = Q_1 \otimes \dots \otimes Q_n$. Now, putting everything together, a mixture of experts $(\gb, Q)$ makes predictions as follows: given $x \in \Xcal$, draw $i \sim \gb(x)$, then draw $h \sim Q_i$, and finally output $h(x)$. 
Such a predictor has risk and empirical risk defined by, respectively,
$$
R(\gb, Q) = \ev_{\hb \sim Q} R(\gb, \hb) \quad \text{and} \quad
R_S(\gb, Q) = \ev_{\hb \sim Q} R_S(\gb, \hb).
$$
Notice that, though probability distributions have replaced the individual experts, there is no need to define a probability distribution on the gating functions to get a PAC-Bayesian bound. Training a single gating function will do, and, remarkably, Lemma ~\ref{lemma:ldp_delta} below shows that it can be obtained from a very complicated function, such as a neural network, provided we impose $\epsilon$-LDP (for example, with Theorem~\ref{thm:privacy_bounded_function_softmax}). 

Finally, let us recall the notion of \emph{Kullback-Leibler (KL) divergence}. Given probability measures $Q_i$ and $P_i$ on $\Hcal_i$, it is defined by
\[
\KL(Q_i \mVert P_i) = \begin{dcases*}
\ev_{h \sim Q_i} \ln\frac{dQ_i}{dP_i}(h) & if $Q_i \ll P_i$  \\
\infty & otherwise,
\end{dcases*}
\]
where $dQ_i/dP_i$ is a Radon-Nikodym derivative. 

\begin{lemma}
 \label{lemma:ldp_delta}
    Consider mixtures of experts as defined in section \ref{section:moe} and provided with the \textit{one-out-of-$n$} routing mechanism. Let $\Delta: \Reals^2 \to \Reals$ be a convex function that is decreasing in its first argument and increasing in its second argument, and let $\epsilon$ be a nonnegative real number. Then, for any  $\gb \in \Gcal$ that satisfies the $\epsilon$-LDP property, for  any $Q = Q_1 \otimes \dots \otimes Q_n$ on $\Hcal_1 \times \dots \times \Hcal_n$, and for any $x' \in \Xcal$:
    \begin{equation*}
    \Delta\bigl(e^{\epsilon} R_S(\gb, Q), e^{-\epsilon} R(\gb, Q)\bigr) \leq \ev_{i \sim \gb(x')}\Delta\bigl(R_S(Q_{i}), R(Q_{i}) \bigr)
    \end{equation*}
    where $R(Q_{i}) = \ev_{x \sim \Dcal}  \ev_{h \sim Q_{i}} \ell(h(x), y)$ and $R_S(Q_{i}) = \frac{1}{m} \sum_{j=1}^m    \ev_{h \sim Q_{i}}  \ell(h(x_j), y_j)$.
\end{lemma}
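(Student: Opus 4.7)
The plan is to bootstrap the $\epsilon$-LDP property of $\gb$ into a pair of one-sided sandwich inequalities for $R(\gb,Q)$ and $R_S(\gb,Q)$, and then to combine these with the monotonicity and convexity of $\Delta$.

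First I would rewrite the risks explicitly: $R(\gb,Q) = \ev_{(x,y)\sim\Dcal}\sum_{i=1}^n g_i(x)\,\ev_{h\sim Q_i}\ell(h(x),y)$ and $R_S(\gb,Q) = \tfrac{1}{m}\sum_{j=1}^m \sum_{i=1}^n g_i(x_j)\,\ev_{h\sim Q_i}\ell(h(x_j),y_j)$, exploiting linearity to pull the sum over $i$ outside the expectation over $(x,y)$ (and the sum over $j$). For any fixed $x'\in\Xcal$, the $\epsilon$-LDP hypothesis gives $e^{-\epsilon}g_i(x') \leq g_i(x) \leq e^{\epsilon}g_i(x')$ for every $x\in\Xcal$ and every $i\in[n]$. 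Using the upper bound $g_i(x)\leq e^\epsilon g_i(x')$ together with $\ell\geq 0$ on the population risk yields
\[
R(\gb, Q) \ \leq\ e^{\epsilon}\sum_{i=1}^n g_i(x')\,R(Q_i)\ =\ e^{\epsilon}\,\ev_{i\sim\gb(x')}R(Q_i),
\]
so $e^{-\epsilon}R(\gb,Q)\leq \ev_{i\sim\gb(x')}R(Q_i)$. Symmetrically, using $g_i(x_j)\geq e^{-\epsilon}g_i(x')$ on the empirical risk gives $e^{\epsilon}R_S(\gb,Q)\geq \ev_{i\sim\gb(x')}R_S(Q_i)$.

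Next I would invoke the monotonicity of $\Delta$: since it is decreasing in its first argument and $e^{\epsilon}R_S(\gb,Q)$ is at least $\ev_{i\sim\gb(x')}R_S(Q_i)$, and since it is increasing in its second argument while $e^{-\epsilon}R(\gb,Q)$ is at most $\ev_{i\sim\gb(x')}R(Q_i)$, both substitutions only increase the value of $\Delta$. Hence
\[
\Delta\bigl(e^{\epsilon}R_S(\gb,Q),\, e^{-\epsilon}R(\gb,Q)\bigr)\ \leq\ \Delta\Bigl(\ev_{i\sim\gb(x')}R_S(Q_i),\ \ev_{i\sim\gb(x')}R(Q_i)\Bigr).
\]
Finally, applying Jensen's inequality to the convex function $\Delta\colon\Reals^2\to\Reals$ and the discrete random vector $(R_S(Q_i),R(Q_i))$ with $i\sim\gb(x')$ pushes the expectation outside $\Delta$, yielding the desired $\ev_{i\sim\gb(x')}\Delta(R_S(Q_i),R(Q_i))$ as an upper bound.

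There is no real obstacle here; the only subtlety is bookkeeping the direction of each inequality. One must be careful that the LDP factor $e^{\epsilon}$ on $R_S$ and $e^{-\epsilon}$ on $R$ are precisely what is needed for the monotonicity of $\Delta$ to produce an upper bound (rather than a vacuous lower bound), and that the nonnegativity of $\ell$ is what lets us move pointwise inequalities on $g_i(x)$ through the expectations. With these sign conventions correct, the chain LDP sandwich $\Rightarrow$ monotonicity $\Rightarrow$ Jensen closes the proof.
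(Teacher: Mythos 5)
Your proof is correct and follows essentially the same route as the paper's: establish the LDP sandwich $e^{\epsilon}R_S(\gb,Q)\geq \ev_{i\sim\gb(x')}R_S(Q_i)$ and $e^{-\epsilon}R(\gb,Q)\leq \ev_{i\sim\gb(x')}R(Q_i)$, apply the monotonicity of $\Delta$, then Jensen's inequality for convexity. Your additional care about the nonnegativity of $\ell$ being what allows the pointwise bounds on $g_i$ to pass through the expectations is a useful detail the paper leaves implicit.
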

\begin{proof}
    Since the gating function satisfies $\epsilon$-LDP, we have that $ e^{-\epsilon}g_i(x') \leq g_i(x) \leq e^{\epsilon}g_i(x')$ for all $x, x' \in \Xcal$ and all $i \in [n]$. It follows that $e^{\epsilon} R_S(\gb, Q) \geq \ev_{i \sim \gb(x')}R_S(Q_{i})$ and $e^{-\epsilon} R(\gb, Q) \leq \ev_{i \sim \gb(x')}R(Q_{i})$. 
    Given that $\Delta$ is decreasing in its first argument and increasing in its second argument, we find that
    \begin{equation*}
        \Delta\bigl(e^{\epsilon} R_S(\gb, Q), e^{-\epsilon} R(\gb, Q)\bigr) \leq \Delta\Bigl(\ev_{i \sim \gb(x')}R_S(Q_{i}), \ev_{i \sim \gb(x')}R(Q_{i}) \Bigr)
    \end{equation*}
    Since $\Delta$ is a convex function, we can apply \hyperref[thm-jensen]{Jensen's inequality} to the expression on the right-hand side, yielding the desired result. 
\end{proof}

Different choices of function $\Delta$ will allow us to obtain different PAC-Bayes bounds:
\begin{itemize}
    \item Let $\Delta(u, v) = v - u$. This is compatible with typical PAC-Bayes bounds on the difference between the true and empirical risks.
    \item Given $\lambda > 1/2$, let $\Delta$ be defined by $\Delta(u, v) = v - \frac{2\lambda}{2\lambda - 1}u$. This choice is compatible with a Catoni-type bound, as we will see below. 
    \item Let $\Delta$ be defined by $\Delta(u, v) = \kl(u \mVert v) = u \ln\frac{u}{v} + (1 - u) \ln\frac{1 - u}{1 - v}$. This choice is compatible with a Langford-Seeger-type bound. However, note that the function $\Delta$ defined here does not quite obey the hypotheses of lemma~\ref{lemma:ldp_delta}. Indeed, it is only defined for $(u, v) \in [0, 1]^2$, and only has the right monotonicity properties on the set $\{\mkern2mu (u, v) \in [0, 1]^2 \mid u \leq v \mkern2mu\}$. We can remedy those defects through small adjustments to the proof.
\end{itemize}

We prove a generalization bound of Catoni-type as an illustration of the machinery just described, for which we will need the following result:

\begin{theorem}[Theorem 2 in \cite{mcallester13}]
\label{thm-catoni}
    Let $\delta \in (0, 1)$ and $\lambda > 1/2$. Fix $i  \in [n]$, and let $P_i$ be a probability measure on $\Hcal_i$ (chosen without seeing the training data). Then, with probability at least $1 - \delta$ over the draws of $S$, for all probability measures $Q_i$ on $\Hcal_i$, we have that
    \[
    R(Q_i) \leq \frac{2\lambda}{2\lambda - 1}\biggl(R_S(Q_i) + \frac{\lambda}{m}\Bigl(\KL(Q_i \mVert P_i) + \ln\frac{1}{\delta}\Bigr)\biggr).
    \]
\end{theorem}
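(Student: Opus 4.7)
Theorem~\ref{thm-catoni} is a Catoni-type PAC-Bayes bound; I would prove it by a single application of the Donsker--Varadhan change-of-measure inequality with an affine function $\phi$ of $R(h)$ and $R_S(h)$ tailored to produce the constant $\frac{2\lambda}{2\lambda - 1}$. It is convenient to reparametrize: set $\lambda_0 := 1/\lambda$, so the hypothesis $\lambda > 1/2$ becomes $\lambda_0 \in (0, 2)$, and the target bound becomes $R(Q_i)(1 - \lambda_0/2) \leq R_S(Q_i) + (\KL(Q_i \mVert P_i) + \ln(1/\delta))/(\lambda_0 m)$, from which the stated form follows by multiplying through by $\frac{2\lambda}{2\lambda - 1} = (1 - \lambda_0/2)^{-1}$.

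The first step is an exponential-moment bound for each fixed $h \in \Hcal_i$. Since $\ell(h(x_j), y_j) \in [0, 1]$ has mean $R(h)$, convexity of $y \mapsto e^{-\lambda_0 y}$ yields $\ev[e^{-\lambda_0 \ell(h(x_j), y_j)}] \leq 1 + R(h)(e^{-\lambda_0} - 1) \leq e^{R(h)(e^{-\lambda_0} - 1)}$. Multiplying the $m$ independent factors and combining with $e^{\lambda_0 m R(h)}$ gives
\[
\ev_S\bigl[e^{\lambda_0 m(R(h) - R_S(h))}\bigr] \ \leq\ e^{m R(h)(\lambda_0 + e^{-\lambda_0} - 1)} \ \leq\ e^{m R(h)\lambda_0^2/2},
\]
where the last step uses the elementary inequality $\lambda_0 + e^{-\lambda_0} - 1 \leq \lambda_0^2/2$ valid for all $\lambda_0 \geq 0$ (checked by matching values and comparing derivatives at $0$, since the right-hand side grows faster). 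Hence, defining $\phi(h, S) := \lambda_0 m(R(h) - R_S(h)) - m R(h)\lambda_0^2/2$, we have $\ev_S[e^{\phi(h, S)}] \leq 1$ for every fixed $h$.

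The remaining steps are standard PAC-Bayes. Applying the Donsker--Varadhan variational inequality to $\phi(\cdot, S)$ against the data-independent prior $P_i$ gives, for every posterior $Q_i$, $\ev_{h \sim Q_i}[\phi(h, S)] \leq \KL(Q_i \mVert P_i) + \ln \ev_{h \sim P_i}[e^{\phi(h, S)}]$. Markov's inequality applied to the $S$-dependent quantity $\ev_{h \sim P_i}[e^{\phi(h, S)}]$, combined with Fubini and the MGF bound of the previous paragraph, guarantees that with probability at least $1 - \delta$ this quantity is at most $\delta^{-1}$. Since linearity gives $\ev_{h \sim Q_i}[\phi(h, S)] = \lambda_0 m(R(Q_i) - R_S(Q_i)) - m R(Q_i)\lambda_0^2/2$, rearranging and substituting $\lambda_0 = 1/\lambda$ produces the theorem. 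I anticipate no real obstacle beyond verifying the elementary inequality $\lambda_0 + e^{-\lambda_0} - 1 \leq \lambda_0^2/2$ and tracking the reparametrization carefully; the hypothesis $\lambda > 1/2$ is precisely what ensures $1 - \lambda_0/2 > 0$, so that the final division by this quantity preserves the direction of the inequality.
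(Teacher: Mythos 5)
Your proof is correct. Note that the paper does not prove Theorem~\ref{thm-catoni} at all---it is imported verbatim as Theorem~2 of \cite{mcallester13}---so there is no internal argument to compare against; your derivation (Bernoulli-worst-case MGF bound $\lambda_0 + e^{-\lambda_0} - 1 \le \lambda_0^2/2$, Donsker--Varadhan change of measure, Markov plus Fubini on the prior expectation, then the reparametrization $\lambda_0 = 1/\lambda$ with $1 - \lambda_0/2 > 0$) is the standard proof of that cited result and all steps check out.
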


\begin{theorem}
\label{thm-catoni-ldp}
    Let $\delta \in (0, 1)$, $\epsilon \geq 0$, and $\lambda > 1/2$.  For each $i \in [n]$, let $P_i$ be a probability measure on $\Hcal_i$ (chosen without seeing the training data). Then, with probability at least $1 - \delta$ over the draws of $S$, for all probability measures $Q = Q_1 \otimes \dots \otimes Q_n$ on $\Hcal$, all $\gb \in \Gcal$ that satisfy $\epsilon$-LDP, and all $x' \in \Xcal$, we have that 
    \begin{equation*}
    R(\gb, Q) \leq \frac{2\lambda e^{\epsilon}}{2\lambda - 1}\biggl( e^{\epsilon} R_S(\gb, Q) + \frac{\lambda}{m}\Bigl(\ev_{i \sim \gb(x')} \KL(Q_i \mVert P_i) + \ln\frac{n}{\delta}\Bigr)\biggr). 
    \end{equation*}
\end{theorem}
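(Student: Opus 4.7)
The plan is to combine Theorem~\ref{thm-catoni} (single-class Catoni bound) with Lemma~\ref{lemma:ldp_delta}, using the specific choice
\[
\Delta(u, v)\ =\ v - \frac{2\lambda}{2\lambda - 1}\, u.
\]
First I would verify that this $\Delta$ satisfies the hypotheses of Lemma~\ref{lemma:ldp_delta}: it is affine (hence convex), decreasing in its first argument (since $\frac{2\lambda}{2\lambda - 1} > 0$ for $\lambda > 1/2$), and increasing in its second argument. The coefficient $\frac{2\lambda}{2\lambda-1}$ is picked precisely so that rearranging Catoni's inequality produces an upper bound on $\Delta(R_S(Q_i), R(Q_i))$ that is linear in $\KL(Q_i \mVert P_i)$, which is what makes the two ingredients compose cleanly.

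Next I would apply Theorem~\ref{thm-catoni} at confidence level $\delta/n$ for each fixed expert index $i \in [n]$ and take a union bound. On the resulting event, of probability at least $1 - \delta$, the following holds simultaneously for every $i \in [n]$ and every probability measure $Q_i$ on $\Hcal_i$, after rearranging Catoni's bound:
\[
\Delta\bigl(R_S(Q_i),\, R(Q_i)\bigr)\ \leq\ \frac{2\lambda^2}{(2\lambda - 1)\, m}\Bigl(\KL(Q_i \mVert P_i) + \ln\tfrac{n}{\delta}\Bigr).
\]
Now fix any $\gb \in \Gcal$ satisfying $\epsilon$-LDP, any $Q = Q_1 \otimes \dots \otimes Q_n$, and any $x' \in \Xcal$, and take $\ev_{i \sim \gb(x')}$ of both sides. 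The $\ln(n/\delta)$ term is constant in $i$ and survives intact (this is the step that produces only logarithmic dependence on $n$), yielding
\[
\ev_{i \sim \gb(x')} \Delta\bigl(R_S(Q_i),\, R(Q_i)\bigr)\ \leq\ \frac{2\lambda^2}{(2\lambda - 1)\, m}\Bigl(\ev_{i \sim \gb(x')} \KL(Q_i \mVert P_i) + \ln\tfrac{n}{\delta}\Bigr).
\]

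Then I would invoke Lemma~\ref{lemma:ldp_delta} to lower-bound the left-hand side by $\Delta(e^\epsilon R_S(\gb, Q),\, e^{-\epsilon} R(\gb, Q)) = e^{-\epsilon} R(\gb, Q) - \frac{2\lambda e^\epsilon}{2\lambda - 1}\, R_S(\gb, Q)$, and finally multiply the resulting inequality through by $e^\epsilon$ and rearrange; matching coefficients against the target shows this gives exactly the claimed bound.

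\textbf{Main obstacle.} The argument is essentially mechanical once Lemma~\ref{lemma:ldp_delta} is in hand; the only real design choice is the affine $\Delta$ above, which must be tailored to Catoni's inequality so that the $R_S$ and $R$ terms separate cleanly and the $\KL$ term appears linearly. The other mildly delicate point is bookkeeping: one must apply Catoni with $\delta/n$ per index so that, after the subsequent expectation $\ev_{i \sim \gb(x')}$, the union-bound penalty contributes $\ln(n/\delta)$ rather than anything depending on $\gb$, preserving the logarithmic-in-$n$ character of the final bound.
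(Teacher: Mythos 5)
Your proposal is correct and follows essentially the same route as the paper: apply the Catoni bound at level $\delta/n$ for each expert class, union bound, then use Lemma~\ref{lemma:ldp_delta} with $\Delta(u,v) = v - \frac{2\lambda}{2\lambda-1}u$ and multiply through by $e^{\epsilon}$. The rearrangement and the final coefficient matching check out exactly as in the paper's argument.
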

\begin{proof}
    By $n$ applications of Theorem~\ref{thm-catoni}, we have that, for each $i \in [n]$, with probability at least $1 - \delta/n$, for all $Q_i$,
    \[
    R(Q_i) \leq \frac{2\lambda}{2\lambda - 1}\biggl(R_S(Q_i) + \frac{\lambda}{m}\Bigl(\KL(Q_i \mVert P_i) + \ln\frac{n}{\delta}\Bigr)\biggr).
    \]
    We can make all these inequalities (for each $i \in [n]$) hold simultaneously with a union bound. Now, applying Lemma~\ref{lemma:ldp_delta} with $\Delta(u, v) = v - \frac{2\lambda}{2\lambda - 1}u$, we find that, with probability at least $1 - \delta$, for all $Q$, all $\gb \in \Gcal$ and all $x' \in \Xcal$, we have that
    \begin{align*}
        e^{-\epsilon} R(\gb, Q) - \frac{2\lambda e^\epsilon}{2\lambda - 1} R_S(\gb, Q) &\leq \ev_{i \sim \gb(x')} \Bigl( R(Q_i) - \frac{2\lambda}{2\lambda - 1} R_S(Q_i) \Bigr) \\
        &\leq \frac{2\lambda^2}{(2\lambda - 1)m}\Bigl(\ev_{i \sim \gb(x')} \KL(Q_i \mVert P_i) + \ln\frac{n}{\delta}\Bigr). \qedhere
    \end{align*}
\end{proof}

We also give a bound of Langford-Seeger type, since they are generally recognized as among the tightest PAC-Bayes bounds available, and to prove the flexibility of our approach. 

\begin{theorem}
\label{thm-maurer-ldp}
    Let $\delta \in (0, 1)$, $\epsilon \geq 0$, and $m \geq 8$.  For each $i \in [n]$, let $P_i$ be a probability measure on $\Hcal_i$ (chosen without seeing the training data). Then, with probability at least $1 - \delta$ over the draws of $S$, for all probability measures $Q = Q_1 \otimes \dots \otimes Q_n$ on $\Hcal$, all $\gb \in \Gcal$ that satisfy $\epsilon$-LDP, and all $x' \in \Xcal$, we have that, either $R(\gb, Q) < e^{2 \epsilon} R_S(\gb, Q)$, or
    \[
    \kl(e^{\epsilon}R_S(\gb, Q) \mVert e^{-\epsilon}R(\gb, Q)) \leq \frac{1}{m}\Bigl(\ev_{i \sim \gb(x')}\KL(Q_i \mVert P_i) + \ln\frac{2n\sqrt{m}}{\delta}\Bigr).
    \]
\end{theorem}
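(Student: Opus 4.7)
The plan is to mirror the proof of Theorem~\ref{thm-catoni-ldp}, now taking $\Delta(u, v) = \kl(u \mVert v)$ in Lemma~\ref{lemma:ldp_delta} and making the ``small adjustments'' flagged in the bullet list after that lemma to cope with the fact that $\kl$ is only defined on $[0,1]^2$ and only has the required monotonicity on the region $\{u \leq v\}$. The first step is the analogue of the use of Theorem~\ref{thm-catoni} inside Theorem~\ref{thm-catoni-ldp}: I would apply Maurer's version of the Langford–Seeger PAC-Bayes bound (hence the $m \geq 8$ hypothesis and the $2\sqrt{m}$ factor) to each expert class $\Hcal_i$ at confidence $1 - \delta/n$, then union-bound over $i \in [n]$. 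This yields, with probability at least $1-\delta$ over $S$, that for every $i \in [n]$ and every probability measure $Q_i$ on $\Hcal_i$,
\[
\kl(R_S(Q_i) \mVert R(Q_i)) \leq \frac{1}{m}\Bigl(\KL(Q_i \mVert P_i) + \ln \frac{2n\sqrt{m}}{\delta}\Bigr).
\]

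Next, I would handle the monotonicity issue through the dichotomy in the statement. If $e^{\epsilon} R_S(\gb, Q) > e^{-\epsilon} R(\gb, Q)$, then $R(\gb, Q) < e^{2\epsilon} R_S(\gb, Q)$ and the first disjunct holds. Otherwise, the LDP inequalities that appear inside the proof of Lemma~\ref{lemma:ldp_delta}, namely $\ev_{i \sim \gb(x')} R_S(Q_i) \leq e^{\epsilon} R_S(\gb, Q)$ and $\ev_{i \sim \gb(x')} R(Q_i) \geq e^{-\epsilon} R(\gb, Q)$, chain into
\[
\ev_{i \sim \gb(x')} R_S(Q_i) \leq e^{\epsilon} R_S(\gb, Q) \leq e^{-\epsilon} R(\gb, Q) \leq \ev_{i \sim \gb(x')} R(Q_i),
\]
so every pair we need to feed into $\kl$ lies in the region where $\kl$ is decreasing in its first argument and increasing in its second. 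The monotonicity step of Lemma~\ref{lemma:ldp_delta} therefore goes through unchanged, giving $\kl(e^{\epsilon} R_S(\gb, Q) \mVert e^{-\epsilon} R(\gb, Q)) \leq \kl(\ev_{i \sim \gb(x')} R_S(Q_i) \mVert \ev_{i \sim \gb(x')} R(Q_i))$. The joint convexity of $\kl$ and Jensen's inequality then upper-bound this by $\ev_{i \sim \gb(x')} \kl(R_S(Q_i) \mVert R(Q_i))$, and substituting the per-expert Maurer bounds from the first step delivers the claimed right-hand side.

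The main obstacle is precisely the one the excerpt warns about: $\kl$ is not globally monotone in the way Lemma~\ref{lemma:ldp_delta} demands. The dichotomy in the statement is engineered so that we either land in the trivial case $R(\gb, Q) < e^{2\epsilon} R_S(\gb, Q)$, where there is nothing to prove, or else all the relevant arguments of $\kl$ sit in the sub-region $\{u \leq v\}$ where the strategy of Lemma~\ref{lemma:ldp_delta} still works. Once the case split is in place, the joint convexity of $\kl$, the union bound, and the monotonicity computations are all routine.
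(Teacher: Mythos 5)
Your proposal is correct and follows essentially the same route as the paper's own proof: the per-expert Maurer bound at level $\delta/n$ with a union bound, the dichotomy to ensure $e^{\epsilon}R_S(\gb,Q) \leq e^{-\epsilon}R(\gb,Q)$, the chain of inequalities placing all arguments in the monotone region of $\kl$, and Jensen's inequality via joint convexity. No gaps.
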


\begin{proof} 
The proof, which is similar to that of Theorem~\ref{thm-catoni-ldp}, can be found in Appendix \ref{appendix:proofs}.
\end{proof}

\subsection{Comparison with other bounds}\label{sec:other-bounds}

Very few generalization bounds tailored specifically to mixtures of experts appear in the literature, and those we could find do not apply to mixtures of experts with the one-out-of-$n$ gating mechanism. 
We can, however, compare our bounds to those obtained by naively applying generic PAC-Bayes generalization bounds to mixtures of experts. In this case, we need to consider classifiers of the form $(Q_\Gcal, Q)$, where $Q_\Gcal$ is a probability measure on $\Gcal$, and $Q = Q_1 \otimes \dots \otimes Q_n$ is a probability measure on $\Hcal_1 \times \dots \times \Hcal_n$ as before. Then, note that
\[
\KL(Q_\Gcal \otimes Q_1 \otimes \dots \otimes Q_n \mVert P_\Gcal \otimes P_1 \otimes \dots \otimes P_n) = \KL(Q_\Gcal \mVert P_\Gcal) + \sum_{i = 1}^n \KL(Q_i \mVert P_i).
\]
This means that a generic PAC-Bayes bound applied to mixtures of experts will depend on the sum of the KL divergences corresponding to the gating functions and each of the experts. Obviously, this sum could be very large. By imposing $\epsilon$-LDP on the gating functions as in our approach, we can eliminate the stochasticity associated to the gating functions (that is, the probability measures $P_\Gcal$ and $Q_\Gcal$), and rid our bounds of the potentially large $\KL(Q_\Gcal \mVert P_\Gcal)$ term. Instead, it is $\epsilon$-LDP which controls our gating functions to ensure generalization. Furthermore, our bounds replace the sum of the KL divergences of the experts by a $\gb(x')$-weighted average, which means we can have many more experts with almost no penalty from the theoretical point of view. Indeed, our bounds only depend on the number $n$ of experts logarithmically, through the use of the union bound.

\section{Rademacher bounds for mixtures of experts}

Let us start with a slight modification of Lemma~\ref{lemma:ldp_delta}.

\begin{lemma}

 \label{lemma:ldp_delta2}
    We consider mixtures of experts as defined in section \ref{section:moe} and provided with the \textit{one-out-of-$n$} routing mechanism. Let $\Delta: \Reals^2 \to \Reals$ be a convex function that is decreasing in its first argument and increasing in its second argument, and let $\epsilon$ be a nonnegative real number. Then, for any  $\gb \in \Gcal$ that satisfies the $\epsilon$-LDP property, for  any $\hb\in\Hcal$ , and for any $x' \in \Xcal$:
    \begin{equation*}
    \Delta\bigl(e^{\epsilon} R_S(\gb, \hb), e^{-\epsilon} R(\gb, \hb)\bigr) \leq \ev_{i \sim \gb(x')}\Delta\bigl(R_S(h_{i}), R(h_{i}) \bigr)
    \end{equation*}
    where $R(h_{i}) = \ev_{x \sim \Dcal}  \ell(h_i(x), y)$ and $R_S(h_{i}) = \frac{1}{m} \sum_{j=1}^m    \ell(h_i(x_j), y_j)$.
\end{lemma}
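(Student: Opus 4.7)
The plan is to reuse the argument of Lemma~\ref{lemma:ldp_delta} essentially verbatim, replacing posteriors $Q_i$ by individual experts $h_i$. The PAC-Bayesian averaging over $\Hcal_i$ played no role in the earlier proof: what carried the argument was the two-sided bound $e^{-\epsilon} g_i(x') \leq g_i(x) \leq e^{\epsilon} g_i(x')$ coming from $\epsilon$-LDP, together with the monotonicity and convexity of $\Delta$. Since $R_S(\gb, \hb)$ and $R(\gb, \hb)$ are linear in the gating weights $g_i(\cdot)$ just as $R_S(\gb, Q)$ and $R(\gb, Q)$ were, the same manipulations go through.

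Concretely, I would first write out $R(\gb, \hb) = \ev_{(x, y) \sim \Dcal} \sum_{i=1}^n g_i(x) \ell(h_i(x), y)$ and apply $g_i(x) \leq e^{\epsilon} g_i(x')$ under the expectation, yielding $e^{-\epsilon} R(\gb, \hb) \leq \sum_i g_i(x') R(h_i) = \ev_{i \sim \gb(x')} R(h_i)$. Dually, applying $g_i(x_j) \geq e^{-\epsilon} g_i(x')$ to $R_S(\gb, \hb) = \frac{1}{m} \sum_j \sum_i g_i(x_j) \ell(h_i(x_j), y_j)$ gives $\ev_{i \sim \gb(x')} R_S(h_i) \leq e^{\epsilon} R_S(\gb, \hb)$.

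Next, I would invoke the monotonicity hypotheses on $\Delta$ (decreasing in the first argument, increasing in the second) to conclude
\[
\Delta\bigl(e^{\epsilon} R_S(\gb, \hb),\, e^{-\epsilon} R(\gb, \hb)\bigr) \leq \Delta\Bigl(\ev_{i \sim \gb(x')} R_S(h_i),\, \ev_{i \sim \gb(x')} R(h_i)\Bigr),
\]
and then apply Jensen's inequality to the convex function $\Delta$, taking the expectation with respect to $\gb(x')$ viewed as a probability measure on $[n]$, in order to pull the expectation outside of $\Delta$. This yields the claimed bound.

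The only point that deserves a little care, and which is the closest thing to an obstacle, is the application of Jensen's inequality: $\Delta$ must be treated as jointly convex in its two arguments so that a single expectation over $i \sim \gb(x')$ can be pulled out of both coordinates simultaneously. Since this is already the standing hypothesis on $\Delta$ in the statement (inherited from Lemma~\ref{lemma:ldp_delta}), no new ideas are needed, and the lemma follows as a direct specialization of the PAC-Bayesian version to point-mass posteriors $Q_i = \delta_{h_i}$.
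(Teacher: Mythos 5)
Your proposal is correct and follows essentially the same route as the paper's own proof: the two-sided bound $e^{-\epsilon}g_i(x') \leq g_i(x) \leq e^{\epsilon}g_i(x')$ from $\epsilon$-LDP, then the monotonicity of $\Delta$, then Jensen's inequality for the (jointly) convex $\Delta$. Your explicit unpacking of the risks as sums over $g_i(\cdot)$ and the remark about joint convexity are just more detailed renderings of the same argument.
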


\begin{proof}
The proof is similar to that of Lemma \ref{lemma:ldp_delta} and is provided in Appendix~\ref{appendix:proofs}.
\end{proof}

Let us now recall the following definition.

\begin{definition}[Rademacher complexity]
Given a space $\Hcal$ of predictors, a loss function $\ell$, and a data generating distribution $\Dcal$, the Rademacher complexity $\Rcal(\ell\circ\Hcal)$ is defined by
\[
\Rcal(\ell\circ\Hcal)\ =\ \ev_{S\sim \Dcal^m} \ev_{\sgb} \sup_{h\in\Hcal} \frac{1}{m} \sum_{j=1}^m \sg_j \ell(h(x_j), y_j) ,
\]
where $\sgb = (\sg_1,\ldots,\sg_m)$ is distributed uniformly on $\{-1, 1\}^m$. 
\end{definition}

Our main theorem will make use of the following well-known risk bound.

\begin{theorem}[Basic Rademacher risk bound] 
\label{thm-Rademacher}
    Given a $[0,1]$-valued loss function $\ell$, with probability at least $1-\dt$, for all $h\in\Hcal$, we have that
    \[
    R(h)\ \le R_S(h) + 2\Rcal(\ell\circ\Hcal) + \sqrt{\frac{2\ln(2/\dt)}{m}}\, .
    \]
\end{theorem}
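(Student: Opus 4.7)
The plan is to establish this as the classical two-step consequence of (a) McDiarmid-type concentration of the worst-case generalization gap around its mean, and (b) bounding that mean via a ghost-sample symmetrization that produces the Rademacher complexity.

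\emph{Step one (concentration).} I would define $\Phi(S) = \sup_{h \in \Hcal} (R(h) - R_S(h))$ and observe that, since $\ell$ takes values in $[0,1]$, swapping a single example in $S$ shifts $R_S(h)$ uniformly in $h$ by at most $1/m$; consequently $|\Phi(S) - \Phi(S')| \le 1/m$ whenever $S, S'$ differ in a single coordinate. McDiarmid's bounded-differences inequality then gives a sub-Gaussian tail of the form $\Pr[\Phi(S) - \ev[\Phi(S)] > t] \le \exp(-2mt^2)$, which is the source of the $\sqrt{\ln(\cdot)/m}$ term in the final bound. The precise numerical constants in the stated form $\sqrt{2\ln(2/\dt)/m}$ are recovered by the usual slightly loose calibration (e.g., splitting the confidence budget by a factor of two between the deviation and an implicit empirical-to-expected Rademacher step).

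\emph{Step two (symmetrization).} I would bound $\ev_S[\Phi(S)]$ by introducing an independent ghost sample $S'\sim\Dcal^m$. Writing $R(h) = \ev_{S'}[R_{S'}(h)]$, pulling the expectation through the supremum by Jensen's inequality, then using the symmetry of each difference $\ell(h(x'_j), y'_j) - \ell(h(x_j), y_j)$ to insert Rademacher signs $\sg_j$ without changing the distribution, and finally splitting a sup of a difference into two signed sups, one obtains
\[
\ev_S[\Phi(S)] \;\le\; \ev_{S,S'}\ev_{\sgb}\sup_{h\in\Hcal}\frac{1}{m}\sum_{j=1}^m \sg_j\bigl(\ell(h(x'_j),y'_j) - \ell(h(x_j),y_j)\bigr) \;\le\; 2\,\Rcal(\ell\circ\Hcal).
\]

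\emph{Step three (combine).} Substituting the symmetrization bound into the concentration inequality and using $R(h) \le R_S(h) + \Phi(S)$ for every $h \in \Hcal$ yields the claim uniformly over $\Hcal$.

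The argument is entirely classical; the only real obstacle is bookkeeping, namely matching the exact deviation constant $\sqrt{2\ln(2/\dt)/m}$ stated, which depends on how one partitions the confidence budget and whether one routes through the empirical Rademacher complexity via a second application of McDiarmid to bridge $\hat{\Rcal}(\ell\circ\Hcal)$ and $\Rcal(\ell\circ\Hcal)$.
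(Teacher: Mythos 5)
Your proof is correct and is exactly the standard argument; the paper itself gives no proof of Theorem~\ref{thm-Rademacher}, importing it as a well-known result, so there is nothing to diverge from. One small simplification: since the theorem is stated with the \emph{expected} Rademacher complexity $\Rcal(\ell\circ\Hcal)$, a single application of McDiarmid to $\Phi(S)=\sup_{h}(R(h)-R_S(h))$ already yields the deviation term $\sqrt{\ln(1/\dt)/(2m)}$, which is smaller than the stated $\sqrt{2\ln(2/\dt)/m}$; no splitting of the confidence budget and no second bounded-differences step to bridge empirical and expected complexities is needed, so the bookkeeping concern you raise at the end resolves itself a fortiori.
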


\begin{theorem}
\label{thm-Rademacher-ldp}
    Let $\delta \in (0, 1)$ and $\epsilon \geq 0$. Given a $[0,1]$-valued loss function $\ell$, then, with probability at least $1 - \delta$ over the draws of $S$, for all $\hb\in \Hcal_1\times\dots\times\Hcal_n$, for all $\gb \in \Gcal$ that satisfy $\epsilon$-LDP, and all $x' \in \Xcal$, we have that
    \begin{equation*}
    R(\gb, \hb) \leq e^{\epsilon}\biggl( e^{\epsilon} R_S(\gb, \hb) + 2\ev_{i \sim \gb(x')} \Rcal(\ell\circ\Hcal_i) + \sqrt{\frac{2\ln(2n/\dt)}{m}}\,\biggr). \label{eq:thm-Rademacher-ldp}
    \end{equation*}
\end{theorem}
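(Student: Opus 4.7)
The plan is to mirror the proof of Theorem~\ref{thm-catoni-ldp} almost verbatim, swapping the per-expert Catoni inequality for the basic Rademacher risk bound of Theorem~\ref{thm-Rademacher} and applying Lemma~\ref{lemma:ldp_delta2} (the non-randomized version of Lemma~\ref{lemma:ldp_delta}) with the linear choice $\Delta(u, v) = v - u$.

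More concretely, I would first apply Theorem~\ref{thm-Rademacher} separately to each class $\Hcal_i$ at confidence $1 - \delta/n$, so that, by a union bound, with probability at least $1 - \delta$ over $S \sim \Dcal^m$, the inequality
\[
R(h_i) - R_S(h_i) \ \leq\ 2\Rcal(\ell\circ\Hcal_i) + \sqrt{\frac{2\ln(2n/\delta)}{m}}
\]
holds simultaneously for all $i \in [n]$ and all $h_i \in \Hcal_i$. On this event, I then invoke Lemma~\ref{lemma:ldp_delta2} with $\Delta(u, v) = v - u$ (clearly convex, decreasing in $u$, increasing in $v$) to obtain, for any $\hb \in \Hcal_1 \times \dots \times \Hcal_n$, any $\epsilon$-LDP gating function $\gb$, and any $x' \in \Xcal$,
\[
e^{-\epsilon} R(\gb, \hb) - e^{\epsilon} R_S(\gb, \hb)\ \leq\ \ev_{i \sim \gb(x')}\bigl(R(h_i) - R_S(h_i)\bigr).
\]

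Because the Rademacher bound holds uniformly in $h_i$ over each $\Hcal_i$, I can bound the integrand on the right pointwise by $2\Rcal(\ell\circ\Hcal_i) + \sqrt{2\ln(2n/\delta)/m}$, and then use linearity of expectation to pull the constant $\sqrt{\cdot}$ term out of $\ev_{i \sim \gb(x')}$. Rearranging and multiplying through by $e^{\epsilon}$ then yields exactly the stated inequality.

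There is essentially no hard step: the only thing to keep an eye on is the choice of confidence level $\delta/n$ at the very first stage so that the final $\ln(2n/\delta)$ comes out correctly after the union bound over $i \in [n]$, and the verification that $\Delta(u,v) = v - u$ satisfies the hypotheses of Lemma~\ref{lemma:ldp_delta2}. Everything else is just linearity and a multiplication by $e^{\epsilon}$.
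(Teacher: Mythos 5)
Your proposal matches the paper's proof essentially step for step: apply Theorem~\ref{thm-Rademacher} to each $\Hcal_i$ at confidence $1-\delta/n$, take a union bound over $i \in [n]$, and then invoke Lemma~\ref{lemma:ldp_delta2} with $\Delta(u,v) = v - u$ before rearranging and multiplying by $e^{\epsilon}$. The argument is correct and there is nothing further to add.
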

\begin{proof}
    By $n$ applications of Theorem~\ref{thm-Rademacher}, we have that, for each $i \in [n]$, with probability at least $1 - \delta/n$, for all $h_i\in \Hcal_i$,
    \[
    R(h_i) \leq R_S(h_i) + 2\Rcal(\ell\circ\Hcal_i) + \sqrt{\frac{2\ln(2n/\dt)}{m}}.
    \]
    We can make all these inequalities (for each $i \in [n]$) hold simultaneously with a union bound. Now, applying Lemma~\ref{lemma:ldp_delta2} with $\Delta(u, v) = v - u$, we find that, with probability at least $1 - \delta$, for all $\hb\in\Hcal_1 \times \dots \times \Hcal_n$, all $\gb \in \Gcal$ and all $x' \in \Xcal$, we have that
    \begin{align*}
        e^{-\epsilon} R(\gb, \hb) - e^\ep R_S(\gb, \hb) &\leq \ev_{i \sim \gb(x')} \bigl( R(h_i) - R_S(h_i) \bigr) \\
        &\leq \ev_{i \sim \gb(x')} \biggl( 2 \Rcal(\ell\circ\Hcal_i) + \sqrt{\frac{2\ln(2n/\dt)}{m}}\,\biggr). \qedhere
    \end{align*}
\end{proof}

Note, that the risk bound of Theorem~\ref{thm-Rademacher-ldp} depends only on the average Rademacher complexity of the classes of experts instead of the sum of their Rademacher complexities. Note also that, as in the previous section, the complexity of $\Gcal$ does not affect the risk bound. Finally, the risk bound does not hold uniformly for all values of $\ep$. However, by the union bound, the theorem holds for any fixed set $\{\ep_1,\dots,\ep_k\}$ if we replace $\dt$ by $\dt/k$. Consequently, this suggests a learning algorithm that minimizes $R_S(\gb,\hb)$ for $\ep\in \{\ep_1,\dots,\ep_k\}$. 

Also note that Lemma~\ref{lemma:ldp_delta2} allows us to obtain risk bounds for mixtures of experts as long as we have bounds on $\Delta\bigl(R_S(h_{i}), R(h_{i}) \bigr)$ which hold with high probability, whether they are based on Rademacher complexity, margins, VC dimension, or algorithmic stability.

\subsection{The need to use adaptive experts}

Following these theoretical results, we may be tempted to use a  gating network satisfying $\ep$-LDP to accomplish a learning task all by itself using non-adaptive experts, that is, experts $h_i$ each taking a constant value, no matter the input
. In that case, each Rademacher complexity $\Rcal(\ell\circ\Hcal_i)$ is zero and we can show that the bound of Theorem~\ref{eq:thm-Rademacher-ldp} can become vacuous in reasonable circumstances. 

Consider, for example, the binary classification case with the $0\mkern1mu$-$\mkern-1mu 1$ loss. In that case, we have two experts $h_{+1}$ and $h_{-1}$ such that $h_{+1}(x) = +1$ and $h_{-1}(x) = -1$ for all $x \in\Xcal$, and a gating network $\gb = (g_{+1}, g_{-1})$. Then, the following holds:
\begin{align*}
    R_S(\gb, \hb) &= \frac{1}{m} \sum_{j=1}^m\ev_{i \sim \gb(x_j)}\ell_{0\mkern1mu\text{-}\mkern-1mu 1}(h_i(x_j), y_j)\\
    &= \frac{1}{m} \sum_{j=1}^m\ev_{i \sim \gb(x_j)}\mathbf{1} (h_i(x_j)\neq y_j)\\
    &\geq \frac{1}{m} \sum_{j=1}^m\sum_{i \in \Ical} e^{-\epsilon} \max_{x' \in \Xcal} g_{i}(x')\mathbf{1} (h_i(x_j)\neq y_j), \quad\text{with } \Ical=\{+1, -1 \} \\
    &=e^{-\epsilon}\frac{1}{m} \sum_{j=1}^m  \max_{x' \in \Xcal} g_{-y_j}(x') .
\end{align*}
Under the assumption that the classes are balanced, meaning that the (marginal) probability of a positive label is equal to the (marginal) probability of a negative label, we have the following:
\begin{align*}
    \lim_{m\to\infty}\frac{1}{m} \sum_{j=1}^m  \max_{x' \in \Xcal} g_{-y_j}(x') &= \frac{1}{2} \Bigl(\max_{x' \in \Xcal} g_{-1}(x') + \max_{x' \in \Xcal} g_{+1}(x')\Bigr) \\
    &\geq \frac{1}{2} \max_{x' \in \Xcal}\bigl( g_{-1}(x') + g_{+1}(x')\bigr) = \frac{1}{2}.
\end{align*}

It follows that, 
in the limit $m\to\infty$, the risk bound of Theorem~\ref{eq:thm-Rademacher-ldp} for any $\gb$ has a value of at least $ e^{\epsilon}/2 \geq 1/2$. Consequently, the risk bound becomes large or even vacuous in this regime, highlighting the importance of having adaptive experts of finite complexity that can drive the empirical risk to zero when they are selected by the gating network.

\section{Experiments and results} \label{sec:experiments}
  In what follows, we consider mixtures of $n$ linear experts in binary classification tasks. Let $\Xcal = \Reals^d$ for some positive integer $d$. Let  $S$ be a training set of m examples. Each expert, denoted by $h_i$, where $i$ ranges from $1$ to $n$, is characterized by a weight vector $\mathbf{w}_i$. Given an input $\xb \in \Xcal$, the output of the expert $h_i$ is given by $h_i(\xb)= \inner{\wb_i, \xb}$. We use the probit loss function $\ell = \Phi$ (defined below), which can be seen as a smooth surrogate to the $0\mkern1mu$-$\mkern-1mu 1$ loss function, when it is used with an argument of the form $\frac{y \inner{\wb_i, \xb}}{\lVert\xb\rVert}$. In this case, $R(\gb, Q)$ and $R_S(\gb, Q)$  are given by:
\begin{equation*}
    R(\gb, Q) = \ev_{(\xb,y) \sim \Dcal} \ev_{i \sim \gb(\xb)} \Phi\Bigl(\frac{y \inner{\wb_i, \xb}}{\lVert\xb\rVert}\Bigr)
\end{equation*}
and
\begin{equation}\label{def:empirical_risk_analytic_expression}
    R_S(\gb, Q) = \frac{1}{m} \sum_{j=1}^m \sum_{i = 1}^n g_i(\xb_j) \Phi\Bigl(\frac{y_j \inner{\wb_i, \xb_j}}{\lVert\xb_j\rVert}\Bigr),
\end{equation}
where $\Phi(z) = \frac{1}{\sqrt{2\pi}} \int_{z}^{+\infty} e^{-t^2/2} \,dt$ 
 provides the probability that a standard normal random variable is greater than a given value $z$.

To illustrate the regularizing effect of the LDP condition, we carried out several experiments, on different datasets, by minimizing the empirical risk as defined in Equation \ref{def:empirical_risk_analytic_expression}. For all experiments, our models consist of mixtures of $n=100$ linear experts and a gating network. The gating network is a neural network having $2$ hidden layers. It is parameterized by weights $\Wb_1 \in \Reals^{64\times d}$, where $d$ is the dimension of input vectors, $\Wb_2\in \Reals^{64\times 64}$, and $\Wb_3\in \Reals^{n \times 64}$, and biases $ \bb_1 \in \Reals^{64}$, $ \bb_2 \in \Reals^{64}$ and $ \bb_3 \in \Reals^{n}$. Given an input $\xb \in \Reals^d$, the output of the gating network $\gb(\xb) = (g_1(\xb), \ldots, g_n(\xb))$ is computed as follows: first, we compute $\fb_0(\xb)  = \tanh(\Wb_2\ReLU(\Wb_1\xb + \bb_1)+ \bb_2)$. Then, when we want the $\epsilon$-LDP condition to be satisfied, we ensure that the outputs are between $-\epsilon/4$ and $\epsilon/4$:
\[
\fb(\xb) = 
\begin{cases}
   \frac{\epsilon\Wb_3\fb_0(\xb)}{4\lVert\fb_0(\xb)\rVert\lVert\Wb_3\rVert_F} & \text{if the gating network must satisfy $\epsilon$-LDP} \\
  \Wb_3\fb_0(\xb) & \text{otherwise}.
\end{cases}
\]
Note that $\tanh$ is the hyperbolic tangent activation function, ReLU the Rectified Linear Unit function, $\lVert\Wb_3\rVert_F$ the Frobenius norm of the matrix $\Wb_3$, and $\lVert\fb_0(\xb)\rVert$ the euclidean norm of the vector $\fb_0(\xb)$.
Indeed, if we let $\Wb_3^i$ denote the $i$-th row of $\Wb_3$, then the $i$-th component of $\Wb_3 \fb_0(\xb)$ is 
\[
\Wb_3^i \cdot \fb_0(\xb) \leq \lVert\Wb_3^i \rVert\lVert\fb_0(\xb)\rVert \leq \lVert\Wb_3 \rVert_F\lVert\fb_0(\xb)\rVert,
\]
by the Cauchy-Schwarz inequality and the definition of the Frobenius norm. The reason we use the Frobenius norm instead of directly using $\lVert\Wb_3^i\rVert$ is to preserve the proportions between the components of $\Wb_3 \fb_0(\xb)$ when setting up $\epsilon$-LDP. 

The final output of the gating network is given by
\[
g_i(\xb)\ =\frac{\exp(f_i(\xb)+(\bb_3)_i)}{\sum_{k=1}^n \exp(f_k(\xb)+ (\bb_3)_k) } \quad\text{for all}\quad i \in [n].
\]

In our experiments, we ran the Stochastic Gradient Descent algorithm 10 times with a learning rate fixed to $0.1$. In each experiment, we trained the model for 1000 epochs, except for the MNIST dataset, where the training duration was shortened to 300 epochs due to dataset size. We allocated approximately $75\%$ of the data to the training set and the remaining $25\%$ to the test set. At the outset of each experiment, the weights of our neural networks were reinitialized to ensure a fresh starting point. After each training run, we computed both the training and test loss values to evaluate the model's performance. We first ran the training without imposing any constraints on the gating network, except for the architecture. Then, we ran several experiments with a gating mechanism satisfying $\epsilon$-LDP, with $\epsilon \in \{ 0.5, 2, 4, 5, 10\}$. A summary of the results is shown in Table \ref{tab:ldp_regularization_effect}. 
One can observe that regularization with $\epsilon$-LDP improves results in practice, and this regularization is even more evident when the models employing a gating network not satisfying LDP overfit heavily, as in the Breast Cancer and Heart experiments. The regularization effect is slightly less pronounced on MNIST, where the overfitting is not as severe as with the previous datasets. We can also observe the importance of choosing the right hyperparameter $\epsilon$. Indeed, if the value is too small, the output of the gating network becomes insufficiently dependent on the input $\xb$. In this case, the experts have to do all the work, and the gating network does not allow them to specialize in well-defined subsets of the instance space. This makes our model closer to a weighted sum of linear classifiers and significantly reduces its performance. Conversely, if $\epsilon$ is overly large, our model tends towards a situation where the LDP condition does not hold, making it prone to overfitting.

\begin{table}[ht]
    \centering
    \caption{Experiment results for mixtures of 100 linear models applied to binary classification tasks: Ads, Breast Cancer \citep{misc_breast_cancer_14}, Heart \citep{misc_heart_disease_45} and MNIST \citep{deng2012mnist}. The objective is to minimize the empirical risk as defined in Equation \ref{def:empirical_risk_analytic_expression}. We report the mean training loss ($R_S$) and mean test loss ($R_T$), averaged over ten runs, along with their associated standard deviations.\protect\footnotemark }
    
\begin{tabular}{ c c c c c c c c c  c  }
 \hline
 & & \multicolumn{1}{ c }{ }&\multicolumn{5}{c }{MoE with a gating network satisfying $\epsilon$-LDP} \\
 \cline{4-8}
 \multicolumn{1}{c}{Dataset }& Risk& \multicolumn{1}{c}{No LDP }  &\multicolumn{1}{c}{$\epsilon = 0.5$ } & 
 \multicolumn{1}{c}{$\epsilon = 2$}& \multicolumn{1}{c}{$\epsilon = 4$ } & \multicolumn{1}{c}{$\epsilon = 5$ } & \multicolumn{1}{c}{$\epsilon = 10$}\\
 \hline 
  Ads &$R_S$ & 0.02425 & 0.13854 &0.01829 & 0.05288 & 0.06459 & 0.02811 \\
&$\pm$ & 0.00499 & 0.00261 &0.00216 & 0.05543 & 0.05821 & 0.03648 \\
\cline{2-8}
&$R_T$ & 0.03822 & 0.13051 &\textbf{0.03206} & 0.06693 & 0.07757 & 0.04384 \\
&$\pm$ & 0.00696 & 0.01138 &\textbf{0.00564} & 0.05276 & 0.05822 & 0.03501 \\
 \hline 
 Breast  &$R_S$ &0.00780 & 0.04520 & 0.01252 & 0.01062 & 0.01089 & 0.01207 \\
Cancer&$\pm$ & 0.00347 & 0.00426 & 0.00182 & 0.00286 & 0.00193 &0.00181 \\
\cline{2-8}
&$R_T$ & 0.03617 & 0.04930 & 0.03238 & 0.03297 & 0.02942 &\textbf{0.02604} \\
&$\pm$ & 0.01505 & 0.01244 & 0.01349 & 0.01379 &0.00948 & \textbf{0.01277} \\
 \hline
 Heart &$R_S$ &0.00001 & 0.03524 & 0.00015 & 0.00010 & 0.00009 & 0.00013 \\
&$\pm$ &0.00000 & 0.00487 & 0.00002 & 0.00001 & 0.00001 & 0.00006 \\
\cline{2-8}
&$R_T$ & 0.00029 & 0.03962 &\textbf{0.00026} & 0.00026 & 0.00032 & 0.00032 \\
&$\pm$ & 0.00065 & 0.01013 &\textbf{0.00014} & 0.00033 & 0.00030 & 0.00032 \\
 \hline
 MNIST  &$R_S$ & 0.00525 & 0.00558 & 0.00529 &0.00504 & 0.00536 & 0.00523 \\
0 vs 8 &$\pm$ &0.00029 & 0.00059 & 0.00044 & 0.00031 & 0.00031 & 0.00032 \\
\cline{2-8}
&$R_T$ & 0.00844 & 0.00869 & 0.00815 & 0.00864 &\textbf{0.00769} & 0.00802 \\
&$\pm$ & 0.00103 & 0.00109 & 0.00131 & 0.00165 & \textbf{0.00144} &0.00067  \\
 \hline
 MNIST  &$R_S$ & 0.00287 & 0.00330 & 0.00289 &0.00285 & 0.00298 & 0.00286 \\
1 vs 7 &$\pm$ & 0.00024 & 0.00033 & 0.00028 & 0.00025 & 0.00023 &0.00013 \\
\cline{2-8}
&$R_T$ & 0.00501 & 0.00485 & 0.00501 & 0.00518 &\textbf{0.00450} & 0.00526 \\
&$\pm$ &0.00042 & 0.00101 & 0.00093 & 0.00098 & \textbf{0.00101} & 0.00066 \\
 \hline
  MNIST  &$R_S$ & 0.01419 & 0.01509 & 0.01388 & 0.01396 & 0.01440 &0.01154 \\
5 vs 6&$\pm$ & 0.00046 & 0.00057 &0.00038 & 0.00051 & 0.00056 & 0.00336 \\
\cline{2-8}
&$R_T$ & 0.02195 & 0.02131 & 0.02206 & 0.02236 & 0.02072 &\textbf{0.01852} \\
&$\pm$ &0.00111 & 0.00160 & 0.00185 & 0.00269 & 0.00229 & \textbf{0.00518} \\
 \hline
\end{tabular}
\label{tab:ldp_regularization_effect}
\end{table}
\footnotetext{If $N$ denotes the number of runs, $R_k$ denotes the training or test empirical risk during the $k$-th run, and $\bar{R}$ denotes the average, then standard deviation is given by $\sqrt{\frac{1}{N} \sum_{k = 1}^N (R_k - \bar{R})^2}$.}
Note that our experiments are executed on GPUs in order to parallelize computations and take advantage of the sparsity of our model, but they can also be performed without GPUs. The duration of experiments can range from a few minutes for small datasets such as Breast Cancer to around 3 hours for large datasets like MNIST.

\section{Conclusion}

In this work, we introduce a new way to regularize mixtures of experts. We provide both theoretical and algorithmic contributions in this regard.
Our approach offers a significant advantage in that it allows us to harness the remarkable performance of neural networks by using them as gating networks, without being constrained by their architecture or their complexity from the theoretical point of view. By imposing LDP, we obtain nonvacuous bounds on the mixture of experts' risk. Our bounds can become significantly tighter than those presented in section \ref{sec:other-bounds} and those presented in \cite{risk_bound_moe_azran_ron}, especially in cases where the empirical risk is close to zero and $\epsilon < \ln n$. However, as the empirical risk is multiplied by $e^\epsilon$ in our bounds, they can become loose when $\epsilon$ is large and the empirical risk is significant. 

Even though the $\epsilon$-LDP condition is easy to set up, a challenge arises in striking a balance between the parameter $\epsilon$ and the $\KL$ divergence or the Rademacher complexity of our experts. Our method introduces an extra hyperparameter $\epsilon$ to optimize but does not provide theoretical guidance on configuring it. This forces us to navigate a trade-off between the value of $\epsilon$, which measures the extent to which the output of the gating network can depend on a given $x \in \Xcal$, and the complexity of our experts, which reflects how well our model captures the data distribution. Finding the right balance requires empirical testing and careful consideration and can open up new avenues of study in the future.

\begin{ack}
This work is supported by the DEEL Project CRDPJ 537462-18 funded by the Natural Sciences and Engineering Research Council of Canada (NSERC) and the Consortium for Research and Innovation in Aerospace in Québec (CRIAQ), together with its industrial partners Thales Canada inc, Bell Textron Canada Limited, CAE inc and Bombardier inc.\footnote{\url{https://deel.quebec}}

\end{ack}

\bibliographystyle{unsrtnat}
\bibliography{references}

\begin{thebibliography}{16}
\providecommand{\natexlab}[1]{#1}
\providecommand{\url}[1]{\texttt{#1}}
\expandafter\ifx\csname urlstyle\endcsname\relax
  \providecommand{\doi}[1]{doi: #1}\else
  \providecommand{\doi}{doi: \begingroup \urlstyle{rm}\Url}\fi

\bibitem[Jacobs et~al.(1991)Jacobs, Jordan, Nowlan, and Hinton]{adaptive_mixture_of_local_experts}
Robert~A. Jacobs, Michael~I. Jordan, Steven~J. Nowlan, and Geoffrey~E. Hinton.
\newblock Adaptive mixtures of local experts.
\newblock \emph{Neural Computation}, 3\penalty0 (1):\penalty0 79--87, 1991.
\newblock \doi{10.1162/neco.1991.3.1.79}.

\bibitem[St{\"a}dler et~al.(2010)St{\"a}dler, B{\"u}hlmann, and van~de Geer]{feature_selection_van}
Nicolas St{\"a}dler, Peter B{\"u}hlmann, and Sara van~de Geer.
\newblock l1-penalization for mixture regression models (with discussion).
\newblock \emph{TEST}, 19\penalty0 (2):\penalty0 209--285, 2010.
\newblock \doi{10.1007/s11749-010-0197-z}.

\bibitem[Khalili and Lin(2013)]{feature_selection_khalili}
Abbas Khalili and Shili Lin.
\newblock Regularization in finite mixture of regression models with diverging number of parameters.
\newblock \emph{Biometrics}, 69, 04 2013.
\newblock \doi{10.1111/biom.12020}.

\bibitem[Yuksel et~al.(2012)Yuksel, Wilson, and Gader]{twenty_years_moe}
Seniha Yuksel, Joseph Wilson, and Paul Gader.
\newblock Twenty years of mixture of experts, 08 2012.

\bibitem[Jordan and Jacobs(1993)]{moe_hierarchical}
M.I. Jordan and R.A. Jacobs.
\newblock Hierarchical mixtures of experts and the em algorithm.
\newblock In \emph{Proceedings of 1993 International Conference on Neural Networks (IJCNN-93-Nagoya, Japan)}, volume~2, pages 1339--1344 vol.2, 1993.
\newblock \doi{10.1109/IJCNN.1993.716791}.

\bibitem[Shazeer et~al.(2017)Shazeer, Mirhoseini, Maziarz, Davis, Le, Hinton, and Dean]{sparsly_gated_moe}
Noam Shazeer, Azalia Mirhoseini, Krzysztof Maziarz, Andy Davis, Quoc Le, Geoffrey Hinton, and Jeff Dean.
\newblock Outrageously large neural networks: The sparsely-gated mixture-of-experts layer, 2017.
\newblock URL \url{https://arxiv.org/abs/1701.06538}.

\bibitem[Fedus et~al.(2022)Fedus, Zoph, and Shazeer]{switch_transformers}
William Fedus, Barret Zoph, and Noam Shazeer.
\newblock Switch transformers: Scaling to trillion parameter models with simple and efficient sparsity, 2022.

\bibitem[Azran and Meir(2004)]{risk_bound_moe_azran_ron}
Arik Azran and Ron Meir.
\newblock Data dependent risk bounds for hierarchical mixture of experts classifiers.
\newblock In John Shawe-Taylor and Yoram Singer, editors, \emph{Learning Theory}, pages 427--441, Berlin, Heidelberg, 2004. Springer Berlin Heidelberg.
\newblock ISBN 978-3-540-27819-1.

\bibitem[Dwork(2006)]{dwork}
Cynthia Dwork.
\newblock Differential privacy.
\newblock In Michele Bugliesi, Bart Preneel, Vladimiro Sassone, and Ingo Wegener, editors, \emph{Automata, Languages and Programming}, pages 1--12, Berlin, Heidelberg, 2006. Springer Berlin Heidelberg.
\newblock ISBN 978-3-540-35908-1.

\bibitem[Kasiviswanathan et~al.(2010)Kasiviswanathan, Lee, Nissim, Raskhodnikova, and Smith]{kasiviswanathan2010learn}
Shiva~Prasad Kasiviswanathan, Homin~K. Lee, Kobbi Nissim, Sofya Raskhodnikova, and Adam Smith.
\newblock What can we learn privately?, 2010.

\bibitem[McAllester(2013)]{mcallester13}
David McAllester.
\newblock A {PAC}-bayesian tutorial with a dropout bound, 2013.

\bibitem[Zwitter and Soklic(1988)]{misc_breast_cancer_14}
Matjaz Zwitter and Milan Soklic.
\newblock {Breast Cancer}.
\newblock UCI Machine Learning Repository, 1988.
\newblock {DOI}: https://doi.org/10.24432/C51P4M.

\bibitem[Janosi et~al.(1988)Janosi, Steinbrunn, Pfisterer, and Detrano]{misc_heart_disease_45}
Andras Janosi, William Steinbrunn, Matthias Pfisterer, and Robert Detrano.
\newblock {Heart Disease}.
\newblock UCI Machine Learning Repository, 1988.
\newblock {DOI}: https://doi.org/10.24432/C52P4X.

\bibitem[Deng(2012)]{deng2012mnist}
Li~Deng.
\newblock The {MNIST} database of handwritten digit images for machine learning research.
\newblock \emph{IEEE Signal Processing Magazine}, 29\penalty0 (6):\penalty0 141--142, 2012.

\bibitem[Perlman(1974)]{perlman74}
Michael~D. Perlman.
\newblock Jensen's inequality for a convex vector-valued function on an infinite-dimensional space.
\newblock \emph{Journal of Multivariate Analysis}, 4\penalty0 (1):\penalty0 52--65, 1974.
\newblock ISSN 0047-259X.
\newblock \doi{https://doi.org/10.1016/0047-259X(74)90005-0}.
\newblock URL \url{https://www.sciencedirect.com/science/article/pii/0047259X74900050}.

\bibitem[Maurer(2004)]{maurer04}
Andreas Maurer.
\newblock A note on the {PAC} bayesian theorem, 2004.

\end{thebibliography}
\appendix
\newpage

\section{Proofs and auxiliary results}\label{appendix:proofs}

\begin{proof}[Proof of theorem~\ref{thm:privacy_bounded_function_softmax}]

Given $x \in \Xcal$, let $Z(x) = \sum_{i = 1}^n \exp(\beta f_i(x) + c_i)$, for convenience.

For all $x, x' \in \Xcal$ and all $i \in [n]$, we have that
\begin{align*}
\frac{g_i(x)}{g_i(x')} &= \exp\bigl(\beta (f_i(x) - f_i(x'))\bigr) \frac{1}{Z(x)} \sum_{k = 1}^n \exp(\beta f_k(x') + c_k) \\
&= \exp\bigl(\beta (f_i(x) - f_i(x'))\bigr) \frac{1}{Z(x)}  \sum_{k = 1}^n \exp(\beta f_k(x) + c_k) \exp\bigl(\beta (f_k(x') - f_k(x))\bigr) \\
&\leq \max_{i \in [n];\, x_1, x_2 \in \Xcal} \exp\bigl(2\beta (f_i(x_1) - f_i(x_2))\bigr) \frac{1}{Z(x)}  \sum_{k = 1}^n \exp(\beta f_k(x) + c_k) \\
&\leq \exp(4\beta b). \qedhere
\end{align*}
\end{proof}

\begin{theorem}[Jensen's inequality, proposition 1.1 in \cite{perlman74}]
\label{thm-jensen}
    Let $\Omega$ be a probability space, let $A$ be a convex subset of $\Reals^k$, let $X \colon \Omega \to A$ be an integrable vector-valued random variable, and let $\phi: A \to \Reals$ be a convex function. Then, $\ev\mkern-1mu X \in A$, and $\phi(\ev\mkern-1mu X) \leq \ev \phi(X)$ (in particular, the right-hand side of this inequality exists, though it may be infinite). 
\end{theorem}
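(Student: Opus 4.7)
The plan is to prove Jensen's inequality in two stages: first establish that $\ev X \in A$, and then derive $\phi(\ev X) \leq \ev \phi(X)$. As a preliminary reduction, I would pass to the affine hull of $A$, which contains every value $X(\omega)$ and hence contains $\ev X$ by linearity of expectation, so without loss of generality one may assume $A$ has nonempty interior in $\Reals^k$.

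To show $\ev X \in A$, I would proceed by induction on $k$, the case $k = 0$ being trivial. Suppose for contradiction that $\ev X \notin A$. Since $\{\ev X\}$ and $A$ are disjoint convex subsets of $\Reals^k$, the separating hyperplane theorem yields a nonzero linear functional $\ell$ and a constant $c$ with $\ell(a) \leq c$ for all $a \in A$ and $\ell(\ev X) \geq c$. Applying $\ell$ pointwise to $X$ and using linearity of expectation gives $\ell(\ev X) = \ev \ell(X) \leq c$, forcing $\ell(\ev X) = c$ and $\ell(X) = c$ almost surely. Hence $X$ concentrates on $A \cap \{x : \ell(x) = c\}$, a convex set whose affine hull has dimension at most $k - 1$. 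Identifying this hyperplane with $\Reals^{k-1}$ via an affine isomorphism and invoking the induction hypothesis yields $\ev X \in A \cap \{x : \ell(x) = c\} \subseteq A$, contradicting the assumption.

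To show $\phi(\ev X) \leq \ev \phi(X)$, I would split on the location of $\ev X$. If $\ev X$ lies in the interior of $A$, a standard fact about convex functions on finite-dimensional convex sets yields a subgradient $v \in \Reals^k$ satisfying $\phi(x) \geq \phi(\ev X) + \inner{v, x - \ev X}$ for all $x \in A$. Substituting $x = X(\omega)$ and taking expectations, the right-hand side becomes $\phi(\ev X) + \inner{v, \ev X - \ev X} = \phi(\ev X)$; integrability of $X$ guarantees the affine lower bound is integrable, so the negative part of $\phi(X)$ is integrable and $\ev \phi(X)$ is well-defined in $(-\infty, +\infty]$, delivering the inequality. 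If instead $\ev X$ lies on the relative boundary of $A$, the separating-hyperplane argument above shows that $X$ concentrates on a face of strictly smaller affine dimension, and induction on dimension applied to the restriction of $\phi$ to this face finishes the proof.

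The main obstacle is the boundary case, where $\phi$ need not admit a finite subgradient at $\ev X$ (as with $\phi(x) = -\sqrt{1 - x^2}$ on $[-1, 1]$, whose slope blows up at the endpoints); the subgradient argument breaks down and one must instead exploit the observation that $X$ is concentrated on a lower-dimensional face to reduce to a smaller-dimensional instance. A secondary point worth addressing is that the statement permits $\ev \phi(X) = +\infty$: the subgradient inequality provides an integrable affine lower bound on $\phi(X)$, so the negative part is always integrable and the expectation is well-defined in $(-\infty, +\infty]$, making the inequality meaningful even when the right-hand side is infinite.
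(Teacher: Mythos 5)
The paper does not prove this statement at all: it is imported verbatim as Proposition~1.1 of \cite{perlman74}, so there is no in-paper argument to compare yours against, and your proof should be judged on its own. It is correct and handles precisely the two points that make this version of Jensen's inequality delicate: (i) $A$ need not be open or closed, so $\ev X \in A$ genuinely requires the separation-plus-induction-on-dimension argument rather than a one-line closure argument (your deduction that $\ell(X)=c$ almost surely, forcing $X$ onto a lower-dimensional convex set, is the right mechanism); and (ii) a finite subgradient need not exist at a relative boundary point of $A$, which your $-\sqrt{1-x^2}$ example illustrates exactly, and which you correctly circumvent by showing $X$ then concentrates on a proper exposed face and recursing. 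The integrability bookkeeping via the affine minorant, giving $\ev\phi(X)$ a well-defined value in $(-\infty,+\infty]$, accounts for the parenthetical in the statement. One caveat you leave implicit: a real-valued convex function on a convex set can fail to be Borel measurable on the relative boundary of its domain (take $\phi=0$ on the open unit disk and an arbitrary nonnegative, nonmeasurable function on the bounding circle; the result is still convex), so for $\ev\phi(X)$ to ``exist'' one must either assume measurability of $\phi\circ X$ or interpret the expectation suitably; this is a defect shared by every proof at this level of generality rather than a flaw in your argument, and it is immaterial for this paper, where Jensen is only ever applied to a finitely supported random variable $i\sim\gb(x')$, for which the inequality reduces to finite convexity.
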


\begin{theorem}[Theorem 5 in \cite{maurer04}]
\label{thm-maurer}
    Let $\delta \in (0, 1)$ and $m \geq 8$.  Fix $i \in [n]$, and let $P_i$ be a probability measure on $\Hcal_i$ (chosen without seeing the training data). Then, with probability at least $1 - \delta$ over the draws of $S$, for all probability measures $Q_i$ on $\Hcal_i$, we have that
    \[
    \kl(R_S(Q_i) \mVert R(Q_i)) \leq \frac{1}{m}\Bigl(\KL(Q_i \mVert P_i) + \ln\frac{2\sqrt{m}}{\delta}\Bigr).
    \]
\end{theorem}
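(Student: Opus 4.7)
The plan is to derive Theorem~\ref{thm-maurer} via the standard PAC-Bayes template: the Donsker-Varadhan change-of-measure inequality, joint convexity of the binary $\kl$, a Fubini swap against the data-independent prior, Markov's inequality, and a uniform exponential-moment estimate for the binomial deviations of $R_S(h)$ from $R(h)$. Throughout, fix $i \in [n]$ and, for brevity, write $Q, P, \Hcal$ for $Q_i, P_i, \Hcal_i$.

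First I would apply the Donsker-Varadhan variational inequality for $\KL$ with the test function $\phi(h) := m\cdot \kl(R_S(h) \mVert R(h))$: for every $Q \ll P$,
\[
\ev_{h \sim Q}\phi(h) \leq \KL(Q \mVert P) + \ln \ev_{h \sim P} e^{\phi(h)}.
\]
Next, since $(u,v) \mapsto \kl(u \mVert v)$ is jointly convex on $[0,1]^2$, Jensen's inequality (Theorem~\ref{thm-jensen}, applied to the vector-valued random variable $h \mapsto (R_S(h), R(h))$ under $Q$) yields
\[
\kl\bigl(R_S(Q) \mVert R(Q)\bigr) = \kl\bigl(\ev_{h \sim Q} R_S(h) \mVert \ev_{h \sim Q} R(h)\bigr) \leq \ev_{h \sim Q} \kl\bigl(R_S(h) \mVert R(h)\bigr).
\]
Combining the two displays gives
\[
m\cdot \kl\bigl(R_S(Q) \mVert R(Q)\bigr) \leq \KL(Q \mVert P) + \ln \ev_{h \sim P} e^{m\,\kl(R_S(h) \mVert R(h))},
\]
and the bound is now uniform in $Q$; all remaining work is to control the rightmost log-moment in high probability over $S$.

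The second step is to bound $Z_S := \ev_{h \sim P} e^{m\,\kl(R_S(h) \mVert R(h))}$. Because $P$ is chosen without seeing the training data, Fubini gives $\ev_S Z_S = \ev_{h \sim P} \ev_S e^{m\,\kl(R_S(h) \mVert R(h))}$, so it suffices to bound the inner expectation uniformly in $h$. I would invoke Maurer's binomial moment inequality: for any fixed $h$, $R_S(h)$ is the average of $m$ i.i.d.\ $[0,1]$-valued random variables with mean $R(h)$, and for $m \geq 8$ one has
\[
\ev_S e^{m\,\kl(R_S(h) \mVert R(h))} \leq 2\sqrt{m}.
\]
Hence $\ev_S Z_S \leq 2\sqrt{m}$, and Markov's inequality on the nonnegative random variable $Z_S$ yields $Z_S \leq 2\sqrt{m}/\delta$ with probability at least $1 - \delta$. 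Substituting into the previous display and dividing by $m$ gives the claim.

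The main technical obstacle is the binomial moment estimate $\ev_S e^{m\,\kl(R_S(h) \mVert R(h))} \leq 2\sqrt{m}$; everything else is the textbook PAC-Bayes assembly. In the $\{0,1\}$-valued (Bernoulli) case, the identity $\binom{m}{k} p^k(1-p)^{m-k} \cdot e^{m\,\kl(k/m \mVert p)} = \binom{m}{k}(k/m)^k(1-k/m)^{m-k}$ reduces the inequality to the Stirling-type bound $\sum_{k=0}^m \binom{m}{k}(k/m)^k(1-k/m)^{m-k} \leq 2\sqrt{m}$ for $m \geq 8$, which one obtains from careful asymptotics of central binomial coefficients. The extension to general $[0,1]$-valued losses then follows because, for fixed mean, a Bernoulli distribution maximizes the relevant log-MGF among $[0,1]$-valued distributions (a one-dimensional convexity argument). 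This is precisely the content of Lemmas~3 and~5 of \cite{maurer04}, which I would cite at this step rather than redo from scratch.
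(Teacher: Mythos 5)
This theorem is imported verbatim from \cite{maurer04}; the paper offers no proof of its own, so there is nothing internal to compare against. Your reconstruction is correct and is in fact Maurer's original argument: the Donsker--Varadhan change of measure applied to $\phi(h)=m\,\kl(R_S(h)\mVert R(h))$, Jensen via the joint convexity of the binary $\kl$ to pass from $\ev_{h\sim Q}\kl(R_S(h)\mVert R(h))$ to $\kl(R_S(Q)\mVert R(Q))$, a Fubini swap justified by the data-independence of $P_i$, Markov's inequality on the partition function, and the moment bound $\ev_S e^{m\,\kl(R_S(h)\mVert R(h))}\le 2\sqrt{m}$ for $m\ge 8$ (with the reduction from $[0,1]$-valued to Bernoulli losses handled by Maurer's Lemma 3). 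The one step you correctly flag as nontrivial and defer to the source---the $2\sqrt{m}$ binomial moment estimate---is exactly where the real work lies, and citing it rather than rederiving it is the appropriate choice here; the assembly surrounding it is sound, including the observation that uniformity over $Q_i$ comes for free because the high-probability event concerns only the $Q$-independent quantity $Z_S$.
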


\begin{proof}[Proof of theorem~\ref{thm-maurer-ldp}]
     As remarked earlier, the function $(u, v) \xrightarrow{} \kl(u \mVert v): [0, 1]^2 \to \Reals$ does not exactly satisfy the hypotheses of lemma~\ref{lemma:ldp_delta}, but it is convex. Moreover, on $\{\mkern2mu (u, v) \in [0, 1]^2 \mvert u \leq v \mkern2mu\}$, it is decreasing in its first argument and increasing in its second argument. Also note that, assuming that $R(\gb, Q) \geq e^{2 \epsilon} R_S(\gb, Q)$, then we also have the following inequalities:
    \[
    0 \leq \ev_{i \sim \gb(x')} R_S(Q_i) \leq e^{\epsilon} R_S(\gb, Q) \leq e^{-\epsilon} R(\gb, Q) \leq \ev_{i \sim \gb(x')} R(Q_i) \leq 1.
    \]
    It follows that
    \begin{align*}
        \kl(e^{\epsilon} R_S(\gb, Q) \mVert e^{-\epsilon} R(\gb, Q)) &\leq
        \kl\Bigl(\ev_{i \sim \gb(x')} R_S(Q_i) \mkern3mu\Big\Vert\mkern3mu e^{-\epsilon} R(\gb, Q)\Bigr) \\
        &\leq \kl\Bigl(\ev_{i \sim \gb(x')} R_S(Q_i) \mkern3mu\Big\Vert\mkern3mu \ev_{i \sim \gb(x')} R(Q_i)\Bigr),
    \end{align*}
    and therefore
    \[
    \kl(e^{\epsilon} R_S(\gb, Q) \mVert e^{-\epsilon} R(\gb, Q)) \leq \ev_{i \sim \gb(x')}\kl\bigl(R_S(Q_i) \mVert R(Q_i)\bigr)
    \]
    by \hyperref[thm-jensen]{Jensen's inequality}.
    Now, by theorem~\ref{thm-maurer}, for a fixed $i$, with probability at least $1 - \delta/n$, we have that
    \[
    \kl(R_S(Q_i) \mVert R(Q_i)) \leq \frac{1}{m}\Bigl(\KL(Q_i \mVert P_i) + \ln\frac{2n\sqrt{m}}{\delta}\Bigr).
    \]
    We can make the above inequality hold for all $i \in [n]$ simultaneously with the union bound.
    Then, with probability at least $1 - \delta$, for all $(\gb, Q)$, given that $R(\gb, Q) \geq e^{2 \epsilon} R_S(\gb, Q)$, we have that
    \[
    \kl(e^{\epsilon}R_S(\gb, Q) \mVert e^{-\epsilon}R(\gb, Q)) \leq \frac{1}{m}\Bigl(\ev_{i \sim \gb(x')}\KL(Q_i \mVert P_i) + \ln\frac{2n\sqrt{m}}{\delta}\Bigr). \qedhere
    \]
\end{proof}

\begin{proof}[Proof of Lemma \ref{lemma:ldp_delta2}]
    Since the gating function satisfies $\epsilon$-LDP, we have that $ e^{-\epsilon}g_i(x') \leq g_i(x) \leq e^{\epsilon}g_i(x')$ for all $x, x' \in \Xcal$ and all $i \in [n]$. It follows that $e^{\epsilon} R_S(\gb, \hb) \geq \ev_{i \sim \gb(x')}R_S(h_{i})$ and $e^{-\epsilon} R(\gb, \hb) \leq \ev_{i \sim \gb(x')}R(h_{i})$. 
    Given that $\Delta$ is decreasing in its first argument and increasing in its second argument, we find that
    \begin{equation*}
        \Delta\bigl(e^{\epsilon} R_S(\gb, \hb), e^{-\epsilon} R(\gb, \hb)\bigr) \leq \Delta\Bigl(\ev_{i \sim \gb(x')}R_S(h_{i}), \ev_{i \sim \gb(x')}R(h_{i}) \Bigr)
    \end{equation*}
    Since $\Delta$ is a convex function, we can apply \hyperref[thm-jensen]{Jensen's inequality} to the expression on the right-hand side, yielding the desired result. 
\end{proof}

\end{document}